\DeclareMathOperator*{\argmax}{arg\,max}
\newtheorem{thm}{Theorem}
\title{Coevolutionary Algorithm for Building Robust Decision Trees under Minimax Regret}
\author{
  Adam {\.Z}ychowski \textsuperscript{\rm 1},
  Andrew Perrault \textsuperscript{\rm 2},
  Jacek Ma{\'n}dziuk \textsuperscript{\rm 1, 3, 4}
}
\begin{document}

\maketitle

\begin{abstract}
In recent years, there has been growing interest in developing robust machine learning (ML) models that can withstand adversarial attacks, including
one of the most widely adopted, efficient, and interpretable ML
algorithms---decision trees (DTs).
This paper proposes a novel coevolutionary algorithm (CoEvoRDT) designed to create robust 
DTs capable of handling noisy 
high-dimensional data in 
adversarial contexts. Motivated by the limitations of traditional 
DT algorithms, we leverage adaptive coevolution to allow 
DTs to evolve and learn from interactions with perturbed input data. CoEvoRDT 
alternately evolves competing populations of 
DTs and perturbed features, 
enabling 
construction of 
DTs with desired properties. 
CoEvoRDT is easily adaptable to various target metrics, allowing the use of tailored robustness criteria such as minimax regret. 
Furthermore, 
CoEvoRDT 
has potential to improve the results 
of other state-of-the-art methods by incorporating 
their outcomes (DTs they produce) into 
the initial population and 
optimize them in the process of coevolution. Inspired by the game theory, CoEvoRDT utilizes mixed Nash equilibrium to enhance convergence. The method is tested on 20 popular datasets and shows superior performance compared to 4 state-of-the-art 
algorithms. It outperformed all 
competing methods 
on 13 datasets 
with adversarial accuracy metrics, and 
on all 20 considered datasets 
with minimax regret. 
Strong experimental results and flexibility in choosing the error measure make CoEvoRDT a promising approach for constructing robust DTs in real-world 
applications.
\end{abstract}

\section{Introduction}

Decision trees (DTs) is
a popular, easily interpretable machine learning (ML) algorithm for classification and regression tasks. 
One of the primary challenges in 
DT construction 
is dealing with noisy and high-dimensional data. 
In particular, it has been shown that ML models (including DTs)
are vulnerable to adversarial,
perturbed 
samples that trick the model into misclassifying them~\cite{kantchelian2016evasion, zhang2020efficient, grosse2017statistical}. To address this challenge, researchers have proposed new defensive algorithms for creating \textit{robust} classification models (see, e.g., ~\citet{chakraborty2021survey}). A model is defined to be robust 
to some perturbation range of its input samples when it assigns the same class to all the samples within that perturbation range, so that 
small malicious alterations of input objects should not deceive a robust classifier.

The vast majority of defensive algorithms for DTs focus on adversarial accuracy~\cite{kantchelian2016evasion,chen2019robust,guo2022fast,ranzato2021genetic,justin2021optimal}. We argue that there are better metrics in principle, and the focus on adversarial accuracy has been driven by computational tractability. Adversarial accuracy is highly sensitive to accuracy on the worst-case perturbation, and when the perturbation range can be large, this can lead to a flattening of intuitively good models and bad ones, as the worst-case perturbations can ``defeat'' all models.

There are other metrics that can better evaluate model robustness, like \textit{max regret}~\cite{savage1951theory}. Max regret is defined as the maximum difference between the result of the given model and the result of the optimal model for any input data perturbation within a given range. Minimizing max regret is referred to as the minimax regret decision criterion.
Adversarial accuracy might provide an overly optimistic or pessimistic view of the model's robustness by focusing only on absolute accuracy value. In contrast, max regret is a more realistic approach since it counts the magnitude of the potential loss by considering the model trained on perturbed data. 
However, 
max regret cannot be directly optimized and 
used as a splitting criterion in the state-of-the-art algorithms.

In recent years, researchers successfully explored the potential of coevolutionary algorithms to various optimization problems~\cite{mahdavi2015metaheuristics} including 
DTs induction~\cite{aitkenhead2008co}. Coevolutionary algorithms 
consist in simultaneous evolution of multiple populations, 
each of them representing a different aspect of the problem. By fostering competition between populations, coevolutionary algorithms can guide the search 
towards the optimal solutions.

Considering the limitations of traditional 
DT algorithms and the promises of coevolutionary computation, we propose a novel coevolutionary algorithm specifically tailored for creating robust decision trees (RDTs) in adversarial contexts. 
Our approach leverages the power of adaptive coevolution, allowing to exploit the competitive interactions between populations of decision trees and adversarial perturbations to adapt and converge toward robust and accurate classifications for complex and noisy data. 
In this process, we can freely define robustness metrics to optimize (including max regret) which leads to the models better tailored to handle perturbed high-dimensional data. Because of the inherent flexibility of evolutionary methods, we can additionally integrate other objective criteria, such as fairness~\cite{aghaei2019learning, jo2023learning}.


\textbf{The main contribution} of this paper is proposition of a novel coevolutionary algorithm (CoEvoRDT) capable of creating robust decision trees. CoEvoRDT has the following key properties:
\begin{itemize}
  \item supremacy over state-of-the-art (SOTA) solution on 13 out of 20 datasets with adversarial accuracy metric and on all 20 datasets with minimax regret,
  \item predominance over existing evolutionary-based approaches for RDTs construction,
  \item to the best of our knowledge, it is the first algorithm able to directly optimize \textit{minimax regret} for 
  RDTs,
  \item it employs novel game theoretic approach for constructing the \textit{Hall of Fame} with \textit{Mixed Nash Equilibrium},
  \item the 
  algorithm is easily adaptable to various target metrics,
  \item by design, CoEvoRDT can be used for potential improvement of results 
  of other SOTA methods by including their resulting DTs
  in the initial population and optimizing them through coevolution.
\end{itemize}

\section{Problem definition}
Let $X \subset \mathbb{R}^d$ be a $d$-dimensional instance space (inputs) and $Y$ 
be the set of possible classes (outputs). 
A classical classification task is to find a function (model) $h:X \rightarrow Y$, $h(x_i) = y_i$, where $y_i$ is true class of $x_i$. Classification performance of model $h$ can be measured by accuracy: $$\mathrm{acc}(h) = \frac{1}{|X|}\sum_{x_i \in X} I[h(x_i) = y_i],$$ where $I[h(x_i) = y_i]$ returns 1 if $h$ predicts the true class of $x_i$,
and 0, otherwise.

Let $\mathcal{N}_\varepsilon (x) = \{z: ||z-x||_\infty \leq \varepsilon\}$ be a ball with center $x$ and radius $\varepsilon$ under the $L_\infty$ metric. The \textbf{adversarial accuracy} of a model $h$ is accuracy on the perturbation in the perturbation set that produces the lowest accuracy. It is formally defined as $$\mathrm{acc}_{\mathrm{adv}}(h, \epsilon) = \frac{1}{|X|}\sum_{x_i \in X} \min_{z _i\in \mathcal{N}_\varepsilon (x_i)} I[h(z_i) = y_i].$$

The \textbf{max regret} of a model $h$ is the maximum \emph{regret} among all possible perturbations $z \in \mathcal{N}_\varepsilon$. Regret is the difference between the best accuracy possible on a particular perturbation and the accuracy $h$ achieves: $$\mathrm{regret}(h,\{z_i\}) = \max_{h'} \mathrm{acc}(h',\{z_i\}) - \mathrm{acc}(h, \{z_i\}),$$ where $\mathrm{acc}(h, \{z_i\})$ is the accuracy achieved by $h$ when $\{x_i\}$ is replaced with $\{z_i\}$. Max regret be expressed as:
$$\mathrm{mr}(h) = \max_{z_i \in \mathcal{N}_\varepsilon (x_i)} \mathrm{regret}(h, \{z_i\})$$.

The problem 
addressed in this paper is finding a 
DT trained on $X$ that for a given $\varepsilon$ optimizes (maximizes for adversarial accuracy or minimizes for max regret) 
a given robustness metric (one of the two above-mentioned).

\section{Motivating example}

Consider a financial institution that 
makes loan acceptance decisions. DTs are well-suited for such high-stakes scenario~\cite{alaradi2020tree}. The dataset of loan applicants in Figure~\ref{fig:motivational-example} has two features: income $I$ and credit score $CS$. The system should correctly make a binary credit decision $D$: accept (1) or reject (0).

For the data in T1, a simple one-node DT can achieve 100\% accuracy. One possible decision rule to achieve this is $CS \geq 55$, which we call DT1.
\begin{figure}[ht]
\centering
\includegraphics[width=0.48\textwidth]
{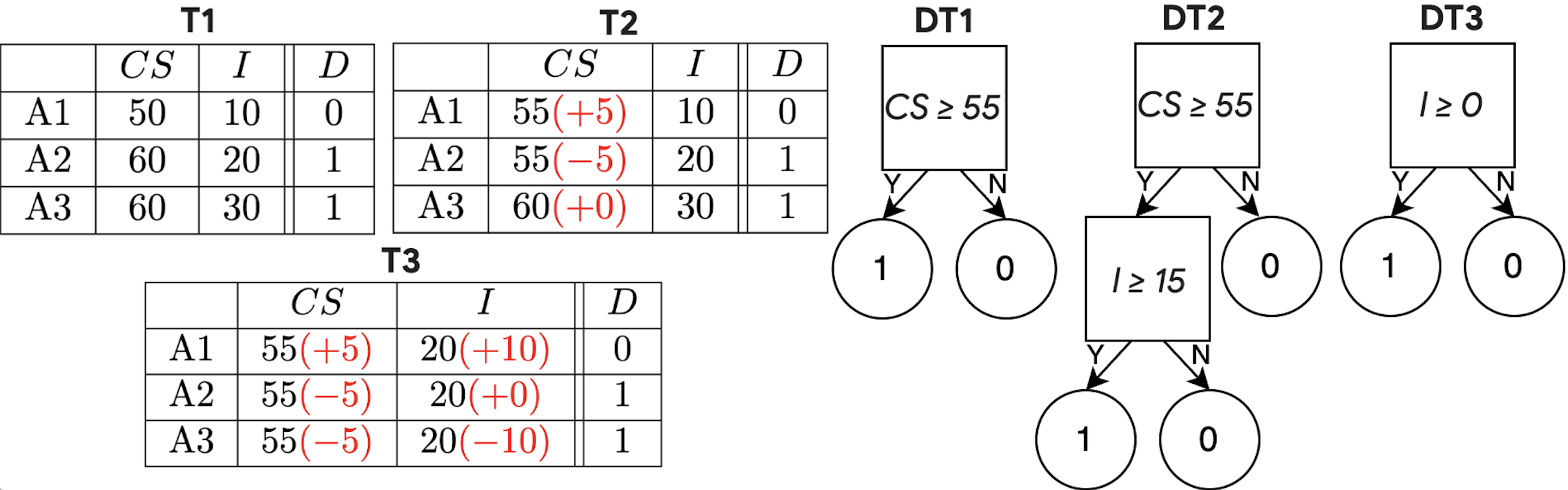}
\caption{Motivational example – perturbed input data and 3 decision trees.}
\label{fig:motivational-example}
\end{figure}

The features of training data may not be representative of test data due to bugs in the system, inaccurate input data, or distribution shift. Table T2 shows a potential perturbation of the data in T1.
In T2, DT1 misclassifies A1 (returning 1 instead of 0). A more robust decision tree, DT2,
accurately classifies all applicants in T1 and T2.

T3 is an example with a larger perturbation that affects both income and credit score. In this perturbation, the three applicants have the same features, meaning that no DT can classify them correctly. Thus, the adversarial accuracy against any perturbation set that includes T3 is at most $\frac{2}{3}$. 
However, achieving such accuracy is easy, any DT that always predicts 1 will do so, including the decision rule $I\geq 0$ (DT3). Consequently, for methods optimizing the adversarial accuracy metric, DT3 is one of the optimal solutions, but it is neither robust nor desired. Maximizing adversarial accuracy myopically focuses on the hardest perturbations in the perturbation set.

From a max regret perspective, DT2 outperforms DT3. Max regret considers not only the worst-case perturbation accuracy, but also the difference between the accuracy of the optimal DT and the robust DT for every perturbation. The regret of DT2 is 0 on all three datasets, resulting in a max regret of 0. DT3 achieves regret of $\frac{1}{3}$ on T1 and T2 and 0 on T3, resulting in a max regret of $\frac{1}{3}$. Thus, under the minimax regret criteria, DT2 would be selected over DT3. Adversarial accuracy loses its ability to distinguish between models as perturbations become large---intuitively good and bad can achieve the same scores.


\section{Related work}

There has been substantial recent work on the construction of robust decision trees.
One line of work aims to improve robustness by choosing more appropriate splitting criteria.
RIGDT-h~\cite{chen2019robust} constructs 
robust DTs based on the introduced notion of adversarial Gini impurity, a modification of classical Gini impurity~\cite{breiman2017classification} adapted to perturbed input data. This method was further improved 
in the GROOT algorithm~\cite{vos2021efficient}, which mimics the greedy recursive splitting strategy that traditional 
DTs use and scores splits with the adversarial Gini impurity. The most recent approach, Fast Provably Robust Decision Trees (FPRDT)~\cite{guo2022fast} is a greedy recursive approach to 
a direct minimization of the adversarial loss. It uses the 0/1 loss, rather than traditional surrogate losses such as square loss and softmax loss, as the splitting
criterion during the construction of the DT. In experiments, we compare to \citet{guo2022fast} and refer the reader to that paper for comparisons with prior methods. Max regret cannot be directly optimized by these methods, which leverage specific properties of adversarial accuracy to design splitting criteria.

\citet{ranzato2021genetic} introduced a genetic adversarial training algorithm (Meta-Silvae) to optimize 
DT stability, building on a history of using genetic algorithms for DTs~\citet{barros2011survey}, and leveraging the geometric of adversarial accuracy. Our approach utilizes a coevolutionary 
method and, to the best of our knowledge, it is the first application of this technique to creating robust DTs. At the same time, the effectiveness of coevolutionary algorithms was experimentally proven in many other domains including multi-objective optimization~\cite{meneghini2016competitive, tian2020coevolutionary}, non-cooperative games~\cite{razi2007finding, zychowski2023coevolution}, preventing adversarial attacks~\cite{o2018artificial}, or Generative Adversarial Networks training~\cite{costa2019coevolution, toutouh2023semi}.

An alternative, exact, approach to robust DTs proposed by \citet{justin2021optimal} uses a mixed-integer optimization formulation. However, at present, its present applicability is limited to small datasets (approximately less than 3200 samples and/or up to 36 features).



Although, to the best of our knowledge, max regret has not been specifically studied in
DTs, it was applied 
to other domains, e.g., neural network training~\cite{alaiz2007minimax}, reinforcement learning~\cite{azar2017minimax, xu2021robust}, robust planning in uncertain Markov decision processes~\cite{rigter2021minimax}, Security Games~\cite{nguyen2014regret}, and computing randomized Nash equilibrium~\cite{gilbert2017double}.

\section{CoEvoRDT algorithm}
A general overview of the \textit{Coevolutionary method for Robust Decision Trees} (CoEvoRDT) is presented in Algorithm~\ref{alg:pseudocode}. CoEvoRDT maintains two populations: one contains encoded 
DTs, and the other contains input data perturbations. 
Both populations are initialized with random elements and then developed alternately. First, the DT population is modified by evolutionary operators (crossover, mutation, and selection) through $l_c$ generations. Then, the perturbation
population is evolved through the same number of $l_c$ generations. The above loop is repeated until the stop condition is satisfied.

\begin{algorithm}[ht]
\caption{CoEvoRDT pseudocode.}\label{alg:pseudocode}
\begin{algorithmic}[1]
\small
\STATE $P_T \gets$ InitializeDecisionTreesPopulation()
\STATE $P_P \gets$ InitializePerturbationsPopulation()
\STATE HoF$_T$ = HoF$_P$ = $\emptyset$ // HoF - Hall of Fame
\STATE $N_\mathrm{top} = 20$

\STATE
\WHILE{stop condition not satisfied}

\FOR{1..$l_c$}
\STATE $P_T \gets P_T \; \cup$ Crossover($P_T$)
\STATE $P_T \gets P_T \; \cup$ Mutate($P_T$)
\STATE $P_T \gets$ Evaluate($P_T$, $P_P$, HoF$_P$)
\STATE $P^*_T \gets$ GetElite($P_T$)\\
\WHILE{$|P^*_T| < N_T$}
\STATE $P^*_T \gets P^*_T \; \cup $ BinaryTournament($P_T$)
\ENDWHILE
\STATE $P_T \gets P^*_T$
\STATE $\mathcal{T}, \mathcal{P} \gets$ MixedNashEquilibrium($P_T, P_P$)
\STATE HoF$_T \gets $ HoF$_T \cup \mathcal{T}$ 
\STATE HoF$_P \gets $ HoF$_P \cup \mathcal{P}$ 
\ENDFOR

\STATE
\FOR{1..$l_c$}
\STATE $P_P \gets P_P \; \cup$ Crossover($P_P$)
\STATE $P_P \gets P_P \; \cup$ Mutate($P_P$)
\STATE $P_P \gets$ Evaluate($P_P$, $P_T$, HoF$_T$, $N_\mathrm{top}$)
\STATE $P^*_P \gets$ GetElite($P_P$)\\
\WHILE{$|P^*_P| < N_P$}
\STATE $P^*_P \gets P^*_P \; \cup $ BinaryTournament($P_P$)
\ENDWHILE
\STATE $P_P \gets P^*_P$
\STATE $\mathcal{T}, \mathcal{P} \gets$ MixedNashEquilibrium($P_T, P_P$)
\STATE HoF$_T \gets $ HoF$_T \cup \mathcal{T}$ 
\STATE HoF$_P \gets $ HoF$_P \cup \mathcal{P}$ 
\ENDFOR

\ENDWHILE
\STATE
\STATE \textbf{return} $\argmax_{t \in P_T}\xi(t)$
\end{algorithmic}
\end{algorithm}

\subsection{Decision tree population}
The 
DT population contains $N_T$ individuals. Each individual represents one candidate solution 
(DT) which is encoded as a list of nodes. Each node is represented by a 7-tuple: $node = \{t, c, P, L, R, o, v, a\}$, where $t$ is a node number ($t = 0$ is the root node), $c$ is a class label of a terminal node (meaningful only for terminal nodes), $P$ is a pointer 
to the parent node, $L$ and $R$ are pointers to the left and right children, respectively (null 
in a terminal node), $o$ indicates which operator is to be used ($<,>,=$) and $v$ is a real number that indicates the value to be tested on attribute $a$.
 
The \textbf{initial population} consists of trees generated by randomly choosing attributes and split values, and halting the 
growth of each DT when the tree reaches a depth randomly selected from an interval $[2, 10]$.

Each individual from the population is selected for \textbf{crossover} with probability $p_c$. Selected individuals are paired randomly, and the crossover operator selects random nodes in two individuals and exchanges the entire subtrees corresponding to each selected node, generating two offspring individuals which are added to the current population.

The \textbf{mutation} operator introduces random changes to the individuals. Each individual is mutated with probability $p_m$. The mutation operator applies randomly one of the three following 
actions: (i) replacing a subtree with a randomly generated one, (ii) 
changing the information in a randomly selected node (setting a new random splitting value $v$ or operator $o$), (iii) prune a randomly selected subtree. For each mutated individual the mutation is applied $10$ times and the highest-fitness individual (among these 10) is added to the current population.

The \textbf{evaluation} procedure is performed against the perturbation population (described next). For each individual (a candidate 
DT) the metric being optimized 
is computed against all perturbations from the adversarial population. 
Since the number of perturbations is relatively small 
any arbitrary chosen metric can be effectively calculated and assigned 
as an individual's fitness value.

\subsection{Perturbation population}

The perturbation population consists of $N_P$ individuals $P \subset \mathbb{R}^d$. Each of them represents 
a perturbed input set $X$ (one perturbation per instance), i.e.,
$\forall_{x \in X}\exists!_{p_x \in P}: p_x \in \mathcal{N}_\varepsilon (x)$. 

The \textbf{initial population} contains random perturbations generated by drawing uniformly each perturbation element from the set of possible ones (according to $\varepsilon$ criteria).

The \textbf{crossover} procedure 
selects a random subset of individuals (each individual is taken with probability $p_c$) and pairs them randomly. Then, for each pair, perturbed input instances from both individuals 
are mixed randomly, i.e., given two crossed parents $c^0 = (x^0_1,\ldots,x^0_n)$ and $c^1 = (x^1_1,\ldots,x^1_n)$ 
the offspring is $\overline{c}^0 = (x^{i_1}_1,\ldots,x^{i_n}_n)$ and $\overline{c}^1 = (x^{1-i_1}_1,\ldots,x^{1-i_n}_n)$, where $i_1,\ldots,i_n \in \{0,1\}$.



\textbf{Mutation} is applied with probability $p_m$ independently to each individual. If a chromosome is selected for mutation, for each input instance and each attribute with probability 0.5 its encoded value is randomly perturbed, i.e. a new random feasible (according to $\varepsilon$ constraint) value is assigned.

\textbf{Evaluation} of the perturbation individuals is not 
an obvious task. 
On the one hand, assigning
the average accuracy
versus all 
DTs in the DT population as a fitness value may be a weak
approach. Observe that a given perturbation may be powerful only against a specific though relevant subset of the DTs,
and as such should be preserved, but averaging across all DTs
will decrease 
its fitness,
posing 
a risk of omitting it in the selection process. On the other hand, if the perturbation fitness value was computed only against the best DT
from the DT population, it would lead to an oscillation of the perturbation population. All perturbations would tend to be efficient for a particular 
DT, becoming vulnerable to other 
DTs and losing diversity. 
Thus, we use $N_\mathrm{top}=20$ highest-fitness 
DTs (merged with all DTs from Hall of Fame) to evaluate each perturbation and perform a targeted optimization with $N_\mathrm{top}=1$ only if we would otherwise terminate (see Stop condition). Experimental justification of the above 
choice is presented in the supplementary material (SM).

\subsection{Hall of Fame}
Hall of Fame (HoF) is a mechanism used to retain and store the best-performing individuals (solutions) that have been encountered during the evolutionary process. By preserving them, the HoF prevents the loss of valuable information and ensures that the best-performing solutions are not discarded during the evolution. 
The most common approach is to add one of the highest-fitness individuals from each generation~\cite{michalewicz1996gas}. We find this approach 
to suboptimal with respect to diversity.
Although the HoF stores the best solutions, it can also be used to maintain a diverse set of high-performing individuals. Diversity is essential in evolutionary algorithms to avoid premature convergence, 
when the algorithm gets stuck in a local optimum and fails to explore better solutions. The HoF can promote diversity by storing solutions that represent different regions of the solution space. 

In our coevolutionary approach, HoF is used to assess solutions more accurately. Namely, instead of calculating the fitness function only against individuals from the adversarial population, it is calculated against a merged set of HoF and population individuals.

Instead of adding the highest-fitness individual to the HoF, in CoEvoRDT, we use a game-theoretic approach. Decision trees and perturbation populations can be treated as sets of strategies of two players in 
a non-cooperative zero-sum game. Then, it is possible to calculate mixed Nash equilibrium. The result is the pair of mixed strategies, i.e., a subset of 
DTs from the population with assigned probabilities $\mathcal{T} = \{(T_1, p_{T_1}), \ldots, (T_n, p_{T_n})\}$ and a similar subset of perturbations with probabilities $\mathcal{P} = \{(P_1, p_{P_1}), \ldots, (P_m, p_{P_m})\}$. Formally, a mixed Nash equilibrium is a pair $(\mathcal{T},\mathcal{P})$ such as $\forall_{\mathcal{T'} \neq \mathcal{T}} \xi_T(\mathcal{T'},\mathcal{P}) \preceq \xi_T(\mathcal{T},\mathcal{P})$ and $\forall_{\mathcal{P'} \neq \mathcal{P}} \xi_P(\mathcal{T},\mathcal{P'}) \preceq \xi_P(\mathcal{T},\mathcal{P})$ where $\xi_{T|P}(\mathcal{T},\mathcal{P})$ denotes some objective robustness metrics calculated for a ``mixed'' decision tree $\mathcal{T}$ and ''mixed'' perturbation $\mathcal{P}$ (either adversarial accuracy or max regret, in our experiments). Note that this is zero-sum because, in robust optimization, the adversary aims to minimize the DT payoff (i.e., objective function): $\xi_{T}(\mathcal{T},\mathcal{P}) = -\xi_{P}(\mathcal{T},\mathcal{P})$. 
We add mixed strategies from Nash equilibria to both HoFs, and they are used in the evaluation process as described above. To evaluate a metric against a mixed object (tree or perturbation), we calculate the expected metric value---first computing the metric for each pair of pure strategies and then taking a weighted average according to the Nash equilibrium probabilities.
We limit HoF size by the lowest-fitness element when a fixed maximum size is exceeded.

A similar approach was previously proposed in~\cite{ficici2003game} but instead of storing in HoF pure strategies from mixed Nash equilibrium we add mixed strategies. The intuition is that a mixed tree is more robust to 
diverse perturbations, which has a positive 
impact on the evolution of perturbations. Similarly, mixed perturbations force the DT population to create more robust DTs that are resistant to a wide spectrum of perturbed data. We demonstrate this in the experiments section.



\subsection{Selection}
The selection process decides which individuals from the current population will be promoted to the next generation. In the beginning, $e$ individuals with the highest fitness value are unconditionally transferred to the next generation. They are called \textit{elite} and preserve the highest-fitness solutions.
Then, a \textit{binary tournament} is repeatedly executed until the next generation population is filled with $N$ individuals. 
In each tournament, two individuals are sampled (with replacement) from the current population (including those affected by crossover and/or mutation). The higher-fitness chromosome (the winner) is promoted to the next generation with probability $p_s$ (so-called selection pressure parameter). Otherwise, the lower-fitness one is promoted.

\subsection{Stop condition}

The algorithm ends when at least one of the following conditions is satisfied: (a) CoEvoRDT attains the maximum number of generations ($l_g$), (b) no improvement of the best-found solution 
(DT) is observed in consecutive $l_c$ generations.
If condition (b) is satisfied, an additional local perturbation improvement subroutine is performed. This procedure is part of the stopping condition and aims to find a better perturbation for the best-fitness decision trees (DTs). Specifically, for each DT with the highest fitness value, the perturbation population evolves using the same process as outlined in lines 21--33 of the Algorithm~\ref{alg:pseudocode}, but with $N_\mathrm{top}=1$ (see line 24). This means that the evaluation for each perturbation is conducted against the DT with current highest fitness. If, for all of those DTs, this routine discovers a perturbation that decreases the fitness of the DT, the counter $l_c$ is reset to zero, and the algorithm execution continues (with $N_\mathrm{top}=20$ and the population's state before local perturbation improvement subroutine execution supplemented with the newly found better perturbation).

To verify conditions (a) and (b) only generations of the DT population are considered. 
The highest-fitness 
DT is returned as a CoEvoRDT result.

\subsection{Convergence}
The alternating optimization can be understood as improving candidate DTs while progressively tightening a bound on the robust objective, as any set of perturbation provides a upper (resp., lower bound) on adversarial accuracy (resp., max regret). When $N_\mathrm{top}=1$ (e.g., at convergence), the bound of the objective has been tightened as much as possible for a candidate DT.

\begin{thm}
If both the decision tree and the perturbation population contain an individual that maximizes their fitness against the opposing population, the decision tree in the current population with the highest fitness optimizes the robust objective.
\end{thm}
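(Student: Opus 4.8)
The plan is to set aside the evolutionary mechanics and read the statement as one about a zero-sum game with payoff $u(h,z)$ to the tree player: take $u(h,z)=\mathrm{acc}(h,\{z_i\})$ in the adversarial-accuracy case and $u(h,z)=\mathrm{acc}(h,\{z_i\})-\max_{h'}\mathrm{acc}(h',\{z_i\})=-\mathrm{regret}(h,\{z_i\})$ in the max-regret case. With $\mathcal{H}$ the set of all decision trees and $\mathcal{Z}=\prod_i \mathcal{N}_\varepsilon(x_i)$ the set of all perturbations, the robust objective of a tree is $\mathrm{rob}(h)=\min_{z\in\mathcal{Z}}u(h,z)$ in both cases --- using, for adversarial accuracy, that the per-instance minima in $\mathrm{acc}_{\mathrm{adv}}$ decompose over the product set $\mathcal{Z}$ --- and the quantity to be matched is $V=\max_{h\in\mathcal{H}}\mathrm{rob}(h)$. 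I would write $Q\subseteq\mathcal{Z}$ for the perturbation set actually used in evaluation (the perturbation population together with $\mathrm{HoF}_P$) and $R\subseteq\mathcal{H}$ for the analogous tree set, so the Hall of Fame is absorbed at no cost. The tree fitness is $f(h)=\min_{z\in Q}u(h,z)$, the returned tree is $\tilde h\in\argmax_{h\in R}f(h)$, and the two hypotheses read: (i) some $h^*\in R$ has $f(h^*)=\max_{h\in\mathcal{H}}f(h)$ --- a global best response to $Q$ already lies in the tree population; (ii) some $z^*\in Q$ has $u(\tilde h,z^*)=\min_{z\in\mathcal{Z}}u(\tilde h,z)$, i.e., a worst case for the fittest tree already lies in the perturbation population --- precisely the $N_{\mathrm{top}}=1$ regime the algorithm enters through the local-perturbation step of the stop condition.

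The proof is then a sandwich of four short inequalities, carried out in this order. First, restricting the adversary to $Q\subseteq\mathcal{Z}$ can only raise a tree's score, so $\max_{h\in\mathcal{H}}f(h)=\max_{h\in\mathcal{H}}\min_{z\in Q}u(h,z)\ge\max_{h\in\mathcal{H}}\min_{z\in\mathcal{Z}}u(h,z)=V$; and since $h^*\in R\subseteq\mathcal{H}$, the population maximum $f(\tilde h)=\max_{h\in R}f(h)$ coincides with $\max_{h\in\mathcal{H}}f(h)$, whence $f(\tilde h)\ge V$. Second, hypothesis (ii) shows the restriction cost $\tilde h$ nothing: $z^*\in Q$ gives $f(\tilde h)=\min_{z\in Q}u(\tilde h,z)\le u(\tilde h,z^*)=\mathrm{rob}(\tilde h)$, while $Q\subseteq\mathcal{Z}$ gives $\mathrm{rob}(\tilde h)=\min_{z\in\mathcal{Z}}u(\tilde h,z)\le\min_{z\in Q}u(\tilde h,z)=f(\tilde h)$, so $f(\tilde h)=\mathrm{rob}(\tilde h)$. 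Third, $\mathrm{rob}(\tilde h)\le V$ since $\tilde h$ is one feasible tree. Chaining these, $V\le f(\tilde h)=\mathrm{rob}(\tilde h)\le V$, hence $\mathrm{rob}(\tilde h)=V$: $\tilde h$ attains the optimal robust objective. Read with $u=-\mathrm{regret}$, the same chain gives that $\tilde h$ is a minimax-regret optimum.

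The only delicate point --- and essentially the only content beyond bookkeeping --- is the exact reading of ``contains an individual that maximizes its fitness against the opposing population'' on the perturbation side, because a perturbation's fitness in CoEvoRDT is scored against the $N_{\mathrm{top}}$ fittest trees, not a single one. For $N_{\mathrm{top}}>1$ hypothesis (ii) weakens to $u(\tilde h,z^*)\le\max_{h\in R}u(h,z^*)$, which does not force $f(\tilde h)=\mathrm{rob}(\tilde h)$; the clean statement really needs the $N_{\mathrm{top}}=1$ evaluation, which is why the ``Convergence'' paragraph highlights that regime. I would therefore state the theorem for that regime and note explicitly that nothing else is used --- not finiteness of the populations, not the specific genetic operators --- only $Q\subseteq\mathcal{Z}$, $R\subseteq\mathcal{H}$, and the two best-response hypotheses.
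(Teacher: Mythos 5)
Your proof is correct and follows essentially the same route as the paper's: the tree-side hypothesis rules out case (ii) of the paper (a better tree missing from the population) and the perturbation-side hypothesis, i.e.\ the $N_{\mathrm{top}}=1$ regime, rules out case (i) (a fitness-lowering perturbation missing from $Q$). The paper states this as an informal two-case contradiction, whereas you make the dichotomy rigorous via the sandwich $V \le f(\tilde h) = \mathrm{rob}(\tilde h) \le V$ and correctly flag that the perturbation-side hypothesis genuinely requires the $N_{\mathrm{top}}=1$ evaluation --- but the underlying argument is the same.
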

\begin{proof}
Suppose the stop condition is met and fitness is maximized by the decision tree and perturbation populations. We claim that the highest-fitness decision tree $h$ maximizes the robust objective. Suppose that this is not the case. Then, either (i) there exists a perturbation $z$ that would lower $h$'s fitness and is missing from the adversarial population or (ii) there is a decision tree $h'$ that would have higher fitness than $h$ but is missing. (ii) cannot occur because we assume that $h$ maximizes fitness. (i) cannot happen because $N_\mathrm{top}=1$ when the stop condition is met.
\end{proof}


\section{Results and discussion}

\textbf{Experimental setup.} The proposed method was tested on 20 widely used classification benchmark problems
of 
various characteristics – 
the number of instances, features, and perturbation coefficient. All selected datasets were 
used in previous studies mentioned in the Related work section, and they are publicly available at https://www.openml.org. 
Table~\ref{tab:datasets} summarizes their basic parameters.
%
\begin{table}[ht]
\scriptsize
\centering
\begin{tabular}{|l|c|c|c|c|}
\hline
\textbf{dataset}  & $\varepsilon$ & Instances & Features & Classes \\ \hline \vspace {-1 mm}
ionos & 0.2  & 351 & 34 & 2 \\ 
\vspace {-1 mm}
breast  & 0.3  & 683 & 9  & 2 \\ \vspace {-1 mm}
diabetes  & 0.05 & 768 & 8  & 2 \\ \vspace {-1 mm}
bank  & 0.1  & 1372  & 4  & 2 \\ \vspace {-1 mm}
Japan:3v4  & 0.1  & 3087  & 14 & 2 \\ \vspace {-1 mm}
spam  & 0.05 & 4601  & 57 & 2 \\ \vspace {-1 mm}
GesDvP  & 0.01 & 4838  & 32 & 2 \\ \vspace {-1 mm}
har1v2  & 0.1  & 3266  & 561  & 2 \\ \vspace {-1 mm}
wine  & 0.1  & 6497  & 11 & 2 \\ \vspace {-1 mm}
collision-det & 0.1  & 33000 & 6  & 2 \\ \vspace {-1 mm}
mnist:1v5 & 0.3  & 13866 & 784  & 2 \\ \vspace {-1 mm}
mnist:2v6 & 0.3  & 13866 & 784  & 2 \\ \vspace {-1 mm}
mnist & 0.3  & 70000 & 784  & 10  \\ \vspace {-1 mm}
f-mnist:2v5  & 0.2  & 14000 & 784  & 2 \\ \vspace {-1 mm}
f-mnist:3v4  & 0.2  & 14000 & 784  & 2 \\ \vspace {-1 mm}
f-mnist:7v9  & 0.2  & 14000 & 784  & 2 \\ \vspace {-1 mm}
f-mnist & 0.2  & 70000 & 784  & 10  \\ \vspace {-1 mm}
cifar10:0v5 & 0.1  & 12000 & 3072 & 2 \\ \vspace {-1 mm}
cifar10:0v6 & 0.1  & 12000 & 3072 & 2 \\ 
cifar10:4v8 & 0.1  & 12000 & 3072 & 2 \\ \hline
\end{tabular}
\caption{Basic parameters of the benchmark datasets.}
\label{tab:datasets}
\end{table}

The CoEvoRDT parameter values used in the experiments and their selection process is described in detail in the SM. 
Since there is no straightforward method to calculate the exact values of adversarial accuracy and minimax regret (due to the presence of infinitely many possible perturbations), the results presented below are computed based on a sample of $10^5$ random perturbations (the same set for each compared method). The reasoning behind choosing this particular sample size is explained in the SM.

We adopted the Lemke-Howson algorithm~\cite{lemke1964equilibrium} for calculating Mixed Nash Equilibrium from Nashpy Python library~\cite{nashpy}. The CART~\cite{breiman2017classification} method was used for computing the reference tree for minimax regret (i.e., highest accuracy trees for a particular perturbation). Statistical significance was checked according to the paired $t$-test  with $p$-value $\leq 0.05$. All tests were run on Intel Xeon Silver 4116 @ 2.10GHz. CoEvoRDT source code is 
made publicly available at
https://github.com/zychowskia/CoEvoRDT.


\begin{table*}[htp]
\centering
\scriptsize
\begin{tabular}{|l|c|c|c|c|c|c||c|}
\hline
\textbf{dataset} & CART  & Meta Silvae  & RIGDT-h  & GROOT  & FPRDT  & CoEvoRDT  & \textbf{CoEvoRDT+FPRDT} \\ \hline \vspace {-1 mm}
ionos  & .094$\pm$.000 & .075$\pm$.007 & .071$\pm$.006 & .061$\pm$.005 & .061$\pm$.006 & \colorbox{gray!20}{\textbf{.052$\pm$.004}} & .052$\pm$.005 \\ \vspace {-1 mm}
breast  & .103$\pm$.000 & .056$\pm$.006 & .069$\pm$.006 & .059$\pm$.005 & .057$\pm$.005 & \colorbox{gray!20}{\textbf{.049$\pm$.004}} & .049$\pm$.005 \\ \vspace {-1 mm}
diabetes  & .202$\pm$.000 & .126$\pm$.008 & .132$\pm$.009 & .124$\pm$.009 & .117$\pm$.007 & \colorbox{gray!20}{\textbf{.096$\pm$.006}} & .094$\pm$.007 \\ \vspace {-1 mm}
bank  & .186$\pm$.000 & .102$\pm$.007 & .108$\pm$.008 & .090$\pm$.006  & .089$\pm$.007 & \colorbox{gray!20}{\textbf{.076$\pm$.006}} & .076$\pm$.006 \\ \vspace {-1 mm}
Japan3v4  & .107$\pm$.000 & .090$\pm$.006  & .083$\pm$.006 & .067$\pm$.006 & .066$\pm$.004 & \colorbox{gray!20}{\textbf{.062$\pm$.006}} & .061$\pm$.006 \\ \vspace {-1 mm}
spam  & .097$\pm$.000 & .079$\pm$.006 & .083$\pm$.006 & .074$\pm$.006 & .074$\pm$.006 & \colorbox{gray!20}{\textbf{.070$\pm$.005}}  & .069$\pm$.005 \\ \vspace {-1 mm}
GesDvP  & .152$\pm$.000 & .129$\pm$.008 & .133$\pm$.010 & .129$\pm$.008 & .131$\pm$.009 & \colorbox{gray!20}{\textbf{.114$\pm$.007}} & .114$\pm$.007 \\ \vspace {-1 mm}
har1v2  & .105$\pm$.000 & .074$\pm$.006 & .084$\pm$.007 & .068$\pm$.006 & .068$\pm$.006 & \colorbox{gray!20}{\textbf{.064$\pm$.005}} & .064$\pm$.005 \\ \vspace {-1 mm}
wine  & .140$\pm$.000  & .125$\pm$.008 & .127$\pm$.009 & .111$\pm$.009 & .109$\pm$.008 & \colorbox{gray!20}{\textbf{.090$\pm$.006}}  & .090$\pm$.007  \\ \vspace {-1 mm}
collision-det  & .142$\pm$.000 & .099$\pm$.007 & .093$\pm$.007 & .088$\pm$.006 & .091$\pm$.007 & \colorbox{gray!20}{\textbf{.061$\pm$.006}} & .059$\pm$.006 \\ \vspace {-1 mm}
mnist:1v5  & .249$\pm$.000 & .078$\pm$.007 & .076$\pm$.006 & .071$\pm$.006 & .067$\pm$.005 & \colorbox{gray!20}{\textbf{.055$\pm$.006}} & .055$\pm$.005 \\ \vspace {-1 mm}
mnist:2v6  & .268$\pm$.000 & .083$\pm$.007 & .087$\pm$.006 & .072$\pm$.005 & .069$\pm$.005 & \colorbox{gray!20}{\textbf{.055$\pm$.004}} & .054$\pm$.004 \\ \vspace {-1 mm}
mnist  & .395$\pm$.000 & .143$\pm$.009 & .139$\pm$.009 & .125$\pm$.007 & .124$\pm$.009 & \colorbox{gray!20}{\textbf{.113$\pm$.008}} & .112$\pm$.008 \\ \vspace {-1 mm}
f-mnist2v5  & .273$\pm$.000 & .254$\pm$.015 & .249$\pm$.015 & .223$\pm$.013 & .238$\pm$.014 & \colorbox{gray!20}{\textbf{.196$\pm$.011}} & .196$\pm$.011 \\ \vspace {-1 mm}
f-mnist3v4  & .290$\pm$.000  & .259$\pm$.014 & .254$\pm$.015 & .246$\pm$.014 & .232$\pm$.013 & \colorbox{gray!20}{\textbf{.202$\pm$.011}} & .199$\pm$.011 \\ \vspace {-1 mm}
f-mnist7v9  & .283$\pm$.000 & .255$\pm$.014 & .251$\pm$.015 & .237$\pm$.014 & .240$\pm$.014  & \colorbox{gray!20}{\textbf{.208$\pm$.013}} & .207$\pm$.012 \\ \vspace {-1 mm}
f-mnist  & .427$\pm$.000 & .345$\pm$.020 & .337$\pm$.018 & .292$\pm$.017 & .286$\pm$.016 & \colorbox{gray!20}{\textbf{.238$\pm$.014}} & .237$\pm$.015 \\ \vspace {-1 mm}
cifar10:0v5  & .419$\pm$.000 & .351$\pm$.019 & .379$\pm$.021 & .347$\pm$.019 & .314$\pm$.018 & \colorbox{gray!20}{\textbf{.241$\pm$.015}} & .236$\pm$.013 \\ \vspace {-1 mm}
cifar10:0v6  & .403$\pm$.000 & .362$\pm$.021 & .368$\pm$.020 & .342$\pm$.018 & .341$\pm$.019 & \colorbox{gray!20}{\textbf{.289$\pm$.016}} & .289$\pm$.016 \\
cifar10:4v8  & .408$\pm$.000 & .357$\pm$.019 & .360$\pm$.021  & .339$\pm$.018 & .331$\pm$.019 & \colorbox{gray!20}{\textbf{.283$\pm$.016}} &  .281$\pm$.017 \\ \hline
\end{tabular}
\caption{
\textbf{Max regrets} (mean $\pm$ std error). CoEvoRDT+FPRDT obtained the best results for all datasets. The best results (except CoEvoRDT+FPRDT) are \textbf{bolded}. \colorbox{gray!20}{Gray background} indicates that a given method is statistically significantly better than all other methods (except CoEvoRDT+FPRDT). }
\label{tab:results-max-regret}
\end{table*}
\begin{table*}[p]
\centering
\scriptsize
\begin{tabular}{|l|c|c|c|c|c|c||c|}
\hline
\textbf{dataset} & CART  & Meta Silvae  & RIGDT-h  & GROOT  & FPRDT  & CoEvoRDT  & \textbf{CoEvoRDT+FPRDT} \\ \hline \vspace {-1 mm}
ionos & .310$\pm$.000  & .695$\pm$.039  & .701$\pm$.045  & .783$\pm$.047  & \colorbox{gray!20}{\textbf{.795$\pm$.047}}  & .791$\pm$.044 & .795$\pm$.049 \\ \vspace {-1 mm}
breast  & .250$\pm$.000  & .797$\pm$.047  & .838$\pm$.052  & .874$\pm$.047  & .876$\pm$.055  & \colorbox{gray!20}{\textbf{.885$\pm$.054}} & .889$\pm$.056 \\ \vspace {-1 mm}
diabetes  & .542$\pm$.000 & .554$\pm$.035  & .569$\pm$.033  & .623$\pm$.043  & \colorbox{gray!20}{\textbf{.648$\pm$.039}}  & .617$\pm$.038 & .648$\pm$.037 \\ \vspace {-1 mm}
bank  & .633$\pm$.000 & .510$\pm$.031 & .468$\pm$.033  & .541$\pm$.036  & \textbf{.658$\pm$.040}  & .657$\pm$.043 & \fbox{.663$\pm$.037} \\ \vspace {-1 mm}
Japan3v4  & .576$\pm$.000 & .566$\pm$.035  & .564$\pm$.037  & .584$\pm$.035  & \textbf{.667$\pm$.039}  & .665$\pm$.037 & .668$\pm$.037 \\ \vspace {-1 mm}
spam  & .302$\pm$.000 & .637$\pm$.036  & .467$\pm$.028  & .723$\pm$.045  & .746$\pm$.049  & \textbf{.751$\pm$.049} & .753$\pm$.045 \\ \vspace {-1 mm}
GesDvP  & .478$\pm$.000 & .637$\pm$.039  & .548$\pm$.033  & .716$\pm$.045  & .735$\pm$.040  & \textbf{.740$\pm$.046}  & .741$\pm$.044 \\ \vspace {-1 mm}
har1v2  & .232$\pm$.000 & .706$\pm$.045  & .707$\pm$.047  & .806$\pm$.048  & .804$\pm$.049  & \colorbox{gray!20}{\textbf{.818$\pm$.054}} & .820$\pm$.052  \\ \vspace {-1 mm}
wine  & .620$\pm$.000  & .637$\pm$.039  & .474$\pm$.027  & .637$\pm$.036  & .674$\pm$.037  & \colorbox{gray!20}{\textbf{.688$\pm$.046}} & \fbox{.692$\pm$.047} \\\vspace {-1 mm}
collision-det & .743$\pm$.000 & .772$\pm$.047  & .764$\pm$.044  & .784$\pm$.052  & .792$\pm$.051  & \textbf{.798$\pm$.053} & \fbox{.803$\pm$.049} \\ \vspace {-1 mm}
mnist:1v5 & .921$\pm$.000 & .952$\pm$.056  & .957$\pm$.054  & .954$\pm$.056  & \textbf{.966$\pm$.058}  & .964$\pm$.059 & .969$\pm$.061 \\ \vspace {-1 mm}
mnist:2v6 & .862$\pm$.000 & .906$\pm$.054  & .919$\pm$.050  & .917$\pm$.052  & \colorbox{gray!20}{\textbf{.922$\pm$.049}}  & .917$\pm$.053 & .922$\pm$.051 \\ \vspace {-1 mm} 
mnist & .673$\pm$.000 & .702$\pm$.041  & .704$\pm$.042  & .743$\pm$.048  & .742$\pm$.049  & \textbf{.745$\pm$.043} & \fbox{.754$\pm$.046} \\ \vspace {-1 mm}
f-mnist2v5  & .675$\pm$.000 & .951$\pm$.053  & .945$\pm$.060  & .971$\pm$.057  & .978$\pm$.055  & \textbf{.982$\pm$.055} & .982$\pm$.059 \\ \vspace {-1 mm}
f-mnist3v4  & .632$\pm$.000 & .808$\pm$.049  & .793$\pm$.044  & .819$\pm$.048  & .865$\pm$.050  & \textbf{.869$\pm$.056} & .870$\pm$.054  \\ \vspace {-1 mm}
f-mnist7v9  & .642$\pm$.000 & .824$\pm$.045  & .81$\pm$.052 & .829$\pm$.052  & \textbf{.876$\pm$.050}  & .868$\pm$.054 & \fbox{.880$\pm$.047}  \\ \vspace {-1 mm}
f-mnist & .464$\pm$.000 & .492$\pm$.033  & .525$\pm$.033  & .536$\pm$.035  & .531$\pm$.033  & \colorbox{gray!20}{\textbf{.544$\pm$.036}} & .546$\pm$.040 \\ \vspace {-1 mm}
cifar10:0v5 & .296$\pm$.000 & .502$\pm$.033  & .347$\pm$.026  & .485$\pm$.036  & .678$\pm$.046  & \colorbox{gray!20}{\textbf{.685$\pm$.039}} & \fbox{.693$\pm$.039} \\ \vspace {-1 mm}
cifar10:0v6 & .587$\pm$.000 & .540$\pm$.038 & .477$\pm$.029  & .556$\pm$.037  & .688$\pm$.040  & \textbf{.692$\pm$.046} & \fbox{.697$\pm$.043} \\ 
cifar10:4v8 & .256$\pm$.000 & .514$\pm$.032  & .488$\pm$.033  & .473$\pm$.032  & .661$\pm$.042  & \textbf{.663$\pm$.045} & .664$\pm$.037 \\ \hline
\end{tabular}
\caption{
\textbf{Adversarial accuracies} (mean $\pm$ std error). CoEvoRDT+FPRDT obtained the best results for all datasets. \fbox{Box} denotes that CoEvoRDT+FPRDT is statistically significantly better than all other methods. The best results (except CoEvoRDT+FPRDT) are \textbf{bolded}. \colorbox{gray!20}{Gray background} indicates that a given method is statistically significantly better than all other methods (except CoEvoRDT+FPRDT). }
\label{tab:results-adversarial-accuracy}
\end{table*}

\begin{table*}[htp]
\centering
\def\arraystretch{1.02}
\setlength{\tabcolsep}{0.2em}
\resizebox{\textwidth}{!}{
\begin{tabular}{|c|cccc||cccc||cccc|}
\hline
  & \multicolumn{4}{c||}{minimax regret} & \multicolumn{4}{c||}{adversarial accuracy} & \multicolumn{4}{c|}{computation time {[}s{]}} \\ \hline
N & \multicolumn{1}{c|}{N FPRDT} & \multicolumn{1}{c|}{N CoEvoRDT} & \multicolumn{1}{c|}{\begin{tabular}[c]{@{}c@{}}CoEvoRDT \\ + N FPRDT\end{tabular}} & \begin{tabular}[c]{@{}c@{}}N CoEvoRDT \\ + N FPRDT\end{tabular} & \multicolumn{1}{c|}{N FPRDT} & \multicolumn{1}{c|}{N CoEvoRDT} & \multicolumn{1}{c|}{\begin{tabular}[c]{@{}c@{}}CoEvoRDT \\ + N FPRDT\end{tabular}} & \begin{tabular}[c]{@{}c@{}}N CoEvoRDT \\ + N FPRDT\end{tabular} & \multicolumn{1}{c|}{N FPRDT} & \multicolumn{1}{c|}{N CoEvoRDT} & \multicolumn{1}{c|}{\begin{tabular}[c]{@{}c@{}}CoEvoRDT \\ + N FPRDT\end{tabular}} & \begin{tabular}[c]{@{}c@{}}N CoEvoRDT \\ + N FPRDT\end{tabular} \\ \hline
1 & \multicolumn{1}{c|}{.304} & \multicolumn{1}{c|}{.238}  & \multicolumn{1}{c|}{.237} & .237 & \multicolumn{1}{c|}{.531} & \multicolumn{1}{c|}{.544} & \multicolumn{1}{c|}{.546} & .546 & \multicolumn{1}{c|}{19}  & \multicolumn{1}{c|}{79} & \multicolumn{1}{c|}{97}  & 97  \\ \hline
2 & \multicolumn{1}{c|}{.302} & \multicolumn{1}{c|}{.237}  & \multicolumn{1}{c|}{.236} & .236 & \multicolumn{1}{c|}{.535} & \multicolumn{1}{c|}{.546}  & \multicolumn{1}{c|}{.548} & .548 & \multicolumn{1}{c|}{40}  & \multicolumn{1}{c|}{161}  & \multicolumn{1}{c|}{114} & 185 \\ \hline
3 & \multicolumn{1}{c|}{.301} & \multicolumn{1}{c|}{.237}  & \multicolumn{1}{c|}{.236} & .236 & \multicolumn{1}{c|}{.539} & \multicolumn{1}{c|}{.548}  & \multicolumn{1}{c|}{.549} & .550 & \multicolumn{1}{c|}{60}  & \multicolumn{1}{c|}{240}  & \multicolumn{1}{c|}{134} & 272 \\ \hline
4 & \multicolumn{1}{c|}{.300} & \multicolumn{1}{c|}{.236}  & \multicolumn{1}{c|}{.236} & .235 & \multicolumn{1}{c|}{.545} & \multicolumn{1}{c|}{.550}  & \multicolumn{1}{c|}{.552} & .553 & \multicolumn{1}{c|}{80}  & \multicolumn{1}{c|}{321}  & \multicolumn{1}{c|}{165} & 362 \\ \hline
5 & \multicolumn{1}{c|}{.299} & \multicolumn{1}{c|}{.236}  & \multicolumn{1}{c|}{.235} & .235 & \multicolumn{1}{c|}{.548} & \multicolumn{1}{c|}{.552}  & \multicolumn{1}{c|}{.554} & .557 & \multicolumn{1}{c|}{99}  & \multicolumn{1}{c|}{406}  & \multicolumn{1}{c|}{183} & 496 \\ \hline
10  & \multicolumn{1}{c|}{.293} & \multicolumn{1}{c|}{.234}  & \multicolumn{1}{c|}{.233} & .233 & \multicolumn{1}{c|}{.552} & \multicolumn{1}{c|}{.557}  & \multicolumn{1}{c|}{.559} & .562 & \multicolumn{1}{c|}{195} & \multicolumn{1}{c|}{774}  & \multicolumn{1}{c|}{264} & 939 \\ \hline
20  & \multicolumn{1}{c|}{.284} & \multicolumn{1}{c|}{.230}  & \multicolumn{1}{c|}{.230} & .229 & \multicolumn{1}{c|}{.558} & \multicolumn{1}{c|}{.564}  & \multicolumn{1}{c|}{.566} & .568 & \multicolumn{1}{c|}{391} & \multicolumn{1}{c|}{1553} & \multicolumn{1}{c|}{454} & 1869  \\ \hline
50  & \multicolumn{1}{c|}{.282} & \multicolumn{1}{c|}{.229}  & \multicolumn{1}{c|}{.229} & .227 & \multicolumn{1}{c|}{.563} & \multicolumn{1}{c|}{.568}  & \multicolumn{1}{c|}{.568} & .569 & \multicolumn{1}{c|}{956} & \multicolumn{1}{c|}{3863} & \multicolumn{1}{c|}{999} & 4564  \\ \hline
100 & \multicolumn{1}{c|}{.282} & \multicolumn{1}{c|}{.228}  & \multicolumn{1}{c|}{.229} & .227 & \multicolumn{1}{c|}{.566} & \multicolumn{1}{c|}{.568}  & \multicolumn{1}{c|}{.568} & .569 & \multicolumn{1}{c|}{1921}  & \multicolumn{1}{c|}{7981} & \multicolumn{1}{c|}{1990}  & 9026  \\ \hline
\end{tabular}
}
\caption{
Best results of repeated $N$ algorithms' runs for \textbf{fashion-mnist} dataset. In CoEvoRDT + N FPRDT 
output 
DTs from N FPRDT independent runs were incorporated into CoEvoRDT initial population.}
\label{tab:multiple-runs-fashion-mnist}
\end{table*}

\begin{table*}[h!t!]
\centering
\def\arraystretch{1.02}
\setlength{\tabcolsep}{0.2em}
\resizebox{\textwidth}{!}{
\begin{tabular}{|c|ccccc||ccccc||ccccc|}
\hline
 & \multicolumn{5}{c||}{minimax regret}  & \multicolumn{5}{c||}{adversarial accuracy}  & \multicolumn{5}{c|}{computation time [s]}   \\ \hline
HoF size & \multicolumn{1}{c|}{\begin{tabular}[c]{@{}c@{}}Nash mixed \\ tree\end{tabular}} & \multicolumn{1}{c|}{\begin{tabular}[c]{@{}c@{}}Top K as \\ mixed tree\end{tabular}} & \multicolumn{1}{c|}{\begin{tabular}[c]{@{}c@{}}Nash single \\ trees\end{tabular}}  & \multicolumn{1}{c|}{Top K} & \begin{tabular}[c]{@{}c@{}}Best\end{tabular} & \multicolumn{1}{c|}{\begin{tabular}[c]{@{}c@{}}Nash mixed \\ tree\end{tabular}} & \multicolumn{1}{c|}{\begin{tabular}[c]{@{}c@{}}Top K as \\ mixed tree\end{tabular}} & \multicolumn{1}{c|}{\begin{tabular}[c]{@{}c@{}}Nash single \\ trees\end{tabular}}  & \multicolumn{1}{c|}{Top K} & \begin{tabular}[c]{@{}c@{}}Best\end{tabular} & \multicolumn{1}{c|}{\begin{tabular}[c]{@{}c@{}}Nash mixed \\ tree\end{tabular}} & \multicolumn{1}{c|}{\begin{tabular}[c]{@{}c@{}}Top K as \\ mixed tree\end{tabular}} & \multicolumn{1}{c|}{\begin{tabular}[c]{@{}c@{}}Nash single \\ trees\end{tabular}}  & \multicolumn{1}{c|}{Top K} & \begin{tabular}[c]{@{}c@{}}Best\end{tabular} \\ \hline
0  & \multicolumn{1}{c|}{.261}& \multicolumn{1}{c|}{.261}& \multicolumn{1}{c|}{.261} & \multicolumn{1}{c|}{.261} & .261 & \multicolumn{1}{c|}{.533}& \multicolumn{1}{c|}{.533}& \multicolumn{1}{c|}{.533} & \multicolumn{1}{c|}{.533} & .533 & \multicolumn{1}{c|}{47}& \multicolumn{1}{c|}{47}& \multicolumn{1}{c|}{47}& \multicolumn{1}{c|}{47} & 47 \\ \hline
10 & \multicolumn{1}{c|}{.242}& \multicolumn{1}{c|}{.248}& \multicolumn{1}{c|}{.247} & \multicolumn{1}{c|}{.251} & .259 & \multicolumn{1}{c|}{.535}& \multicolumn{1}{c|}{.535}& \multicolumn{1}{c|}{.535} & \multicolumn{1}{c|}{.534} & .533 & \multicolumn{1}{c|}{50}& \multicolumn{1}{c|}{50}& \multicolumn{1}{c|}{50}& \multicolumn{1}{c|}{50} & 50 \\ \hline
20 & \multicolumn{1}{c|}{.240}& \multicolumn{1}{c|}{.246}& \multicolumn{1}{c|}{.245} & \multicolumn{1}{c|}{.249} & .256 & \multicolumn{1}{c|}{.536}& \multicolumn{1}{c|}{.536}& \multicolumn{1}{c|}{.536} & \multicolumn{1}{c|}{.536} & .534 & \multicolumn{1}{c|}{55}& \multicolumn{1}{c|}{54}& \multicolumn{1}{c|}{55}& \multicolumn{1}{c|}{56} & 51 \\ \hline
50 & \multicolumn{1}{c|}{.241}& \multicolumn{1}{c|}{.244}& \multicolumn{1}{c|}{.245} & \multicolumn{1}{c|}{.249} & .254 & \multicolumn{1}{c|}{.536}& \multicolumn{1}{c|}{.536}& \multicolumn{1}{c|}{.536} & \multicolumn{1}{c|}{.536} & .534 & \multicolumn{1}{c|}{61}& \multicolumn{1}{c|}{58}& \multicolumn{1}{c|}{59}& \multicolumn{1}{c|}{62} & 54 \\ \hline
100& \multicolumn{1}{c|}{.239}& \multicolumn{1}{c|}{.243}& \multicolumn{1}{c|}{.243} & \multicolumn{1}{c|}{.247} & .253 & \multicolumn{1}{c|}{.538}& \multicolumn{1}{c|}{.538}& \multicolumn{1}{c|}{.537} & \multicolumn{1}{c|}{.537} & .535 & \multicolumn{1}{c|}{68}& \multicolumn{1}{c|}{63}& \multicolumn{1}{c|}{66}& \multicolumn{1}{c|}{65} & 56 \\ \hline
200& \multicolumn{1}{c|}{.238}& \multicolumn{1}{c|}{.242}& \multicolumn{1}{c|}{.242} & \multicolumn{1}{c|}{.244} & .250 & \multicolumn{1}{c|}{.543}& \multicolumn{1}{c|}{.539}& \multicolumn{1}{c|}{.540} & \multicolumn{1}{c|}{.539} & .535 & \multicolumn{1}{c|}{77}& \multicolumn{1}{c|}{70}& \multicolumn{1}{c|}{76}& \multicolumn{1}{c|}{77} & 59 \\ \hline
500& \multicolumn{1}{c|}{.237}& \multicolumn{1}{c|}{.241}& \multicolumn{1}{c|}{.241} & \multicolumn{1}{c|}{.243} & .248 & \multicolumn{1}{c|}{.545}& \multicolumn{1}{c|}{.540}& \multicolumn{1}{c|}{.540} & \multicolumn{1}{c|}{.540} & .536 & \multicolumn{1}{c|}{86}& \multicolumn{1}{c|}{79}& \multicolumn{1}{c|}{91}& \multicolumn{1}{c|}{90} & 60 \\ \hline
$\infty$& \multicolumn{1}{c|}{.237}& \multicolumn{1}{c|}{.239}& \multicolumn{1}{c|}{.240} & \multicolumn{1}{c|}{.240} & .248 & \multicolumn{1}{c|}{.545}& \multicolumn{1}{c|}{.540}& \multicolumn{1}{c|}{.541} & \multicolumn{1}{c|}{.540} & .536 & \multicolumn{1}{c|}{86}& \multicolumn{1}{c|}{77}& \multicolumn{1}{c|}{85}& \multicolumn{1}{c|}{85} & 61 \\ \hline
\end{tabular}
}
\caption{Results with respect of HoF size for \textbf{fashion-mnist} dataset. $\infty$ means that there was no limit on HoF size.}
\label{tab:hof-size-fashion-mnist}
\end{table*}

\textbf{Robustness.}
CoEvoRDT was trained separately for max regret and adversarial accuracy, and compared with $4$ SOTA
methods (discussed in the related work section). The results are shown in Tables~\ref{tab:results-max-regret} and~\ref{tab:results-adversarial-accuracy}, respectively. They were also compared 
with the CART algorithm~\cite{breiman2017classification}, 
a popular
method for creating DTs 
for non-perturbed 
training data (not designed for the RDT scenario).
In both tables, the last column (CoEvoRDT+FPRDT) presents the results of 
adding the FPRDT output DT
to the CoEvoRDT initial population, and running CoEvoRDT afterwards.
On the max regret metric, CoEvoRDT clearly outperforms all other competitors on all datasets. Adding the FPRDT tree only narrowly improves its outcome. 
The results support our claim that 
SOTA methods, which cannot directly minimize max regret, perform significantly worse than CoEvoRDT in terms of this metric.
From an adversarial accuracy perspective, 
for 13 out of 20 datasets, CoEvoRDT yielded the best mean results (5 of them statistically significant). For the 
remaining 7 datasets FPRDT method was superior (with statistical significance in 3 cases). For this metrics, adding FPRDT tree to the initial CoEvoRDT population (CoEvoRDT+FPRDT) led to clear advantage versus baseline CoEvoRDT and FPRDT alone. 

For a more detailed analysis, fashion-mnist dataset is selected as one of the largest. 
Detailed results for all other datasets are presented in SM.

\textbf{Runtime comparison}. In general, CoEvoRDT runtime varies from a few seconds to a couple of minutes for the largest datasets.
The average computation time of a single run of the strongest competitor, FPRDT is 2 to 8 times lower than CoEvoRDT. Thus, the natural question which can arise is what if we run FPRDT multiple times to have an equal computation budget and choose the best result. This approach is addressed in Table~\ref{tab:multiple-runs-fashion-mnist}, which presents computation time and the best results in terms of minimax regret and adversarial accuracy for multiple 
runs of FPRDT, CoEvoRDT, and CoEvoRDT initialized with multiple FPRDT outcomes. More runs can notably improve results for all methods. For max regret, even within 100 repeats, FPRDT was not able to find a solution close to the single CoEvoRDT outcome. For adversarial accuracy multiple FPRDT runs outperformed CoEvoRDT, but for greater numbers of repeats (above 20) both methods seem to converge to similar results. Given some constrained computation budget the best option is to run CoEvoRDT + $N$ FPRDT.

\textbf{HoF size and construction.}
Table~\ref{tab:hof-size-fashion-mnist} presents the results for various HoF sizes.
For each size, 
5 variants of constructing HoF are considered:
adding one \textit{mixed tree} from mixed Nash equilibrium after each generation (which is the baseline used in CoEvoRDT), adding all \textit{single trees} from mixed Nash equilibrium (i.e., all the pure strategies with positive probability), adding only one highest-fitness tree from the population, adding top $K$ highest-fitness individuals from the current population (where $K$ is the number of trees from Nash mixed equilibrium) and adding one mixed tree composed of top $K$ highest-fitness trees with equal probabilities.
Firstly, it is clear that even small HoF significantly improves results for all variants. Moreover, adding a Nash mixed tree seems to be the best option, while adding only one
highest-fitness 
tree is the worst approach. 
The advantage of Nash mixed tree and top $K$ as a mixed tree with equal probabilities 
over Nash single trees and top $K$ trees shows the benefit of 
using mixed trees in the HoF. It may 
stem from the fact that mixed tree is 
\textit{more robust} to various perturbations, and 
consequently the perturbation population is forced to find better perturbations to \textit{outplay} those mixed trees. As a result, 
the DT population is 
forced to create even more robust trees.
At the same time, the straightforward approach of creating a mixed tree of highest-fitness individuals is less powerful than a mixed tree from mixed Nash equilibrium.

The \textbf{generation limit} of 
CoEvoRDT 
was set to 1000, but in practice, it was rarely reached and the other stop condition (no improvement of best-found solution) was fulfilled first. The average number of generations across all datasets was 385. The lowest average number was observed for the diabetes 
(152), and the 
highest for cifar10:0v5 (865). 
The depth of DTs generated by CoEvoRDT
varies from 6 to 23 which is not much different than 
DTs created by other methods.

CoEvoRDT \textbf{memory consumption} is low and does not exceed 150 MB for the largest datasets.

\section{Conclusions}
In this paper, we present CoEvoRDT, a novel coevolutionary algorithm designed to construct robust decision trees capable of handling 
perturbed high-dimensional data.
Our motivation stems from the vulnerability of traditional 
DT algorithms to adversarial perturbations and the limitations of existing defensive algorithms in optimizing specific metrics like max regret. The 
flexibility of CoEvoRDT in accommodating various target metrics makes it adaptable to a wide range of applications and 
domains, 
including when robustness is mixed with other objectives such as fairness. 
We propose a 
novel game-theoretic approach 
to constructing the Hall of Fame with Mixed Nash Equilibrium, which significantly contributes to 
the DTs robustness and convergence speed.
CoEvoRDT can additionally integrate results from another strong and fast method into the initial population, if one is available, to further improve performance.

CoEvoRDT was comprehensively tested on 20 popular benchmark datasets and compared with 
4 SOTA algorithms, 
presenting on par 
performance to the best competitive method in adversarial accuracy metrics, and outperforming all 
competitors in terms of minimax regret.


Our future work focuses on investigating the potential of 
implementing CoEvoRDT as a multi-population algorithm, such as the island model~\cite{skolicki2005analysis}, to speed up convergence and potentially further 
boost its performance.


\section{Acknowledgements}
We gratefully acknowledge the funding support by program “Excellence initiative—research university” for the AGH University of Krakow as well as the ARTIQ project: UMO-2021/01/2/ST6/00004 and ARTIQ/0004/2021

\bibliography{aaai24}

\clearpage
\includepdf[pages=-]{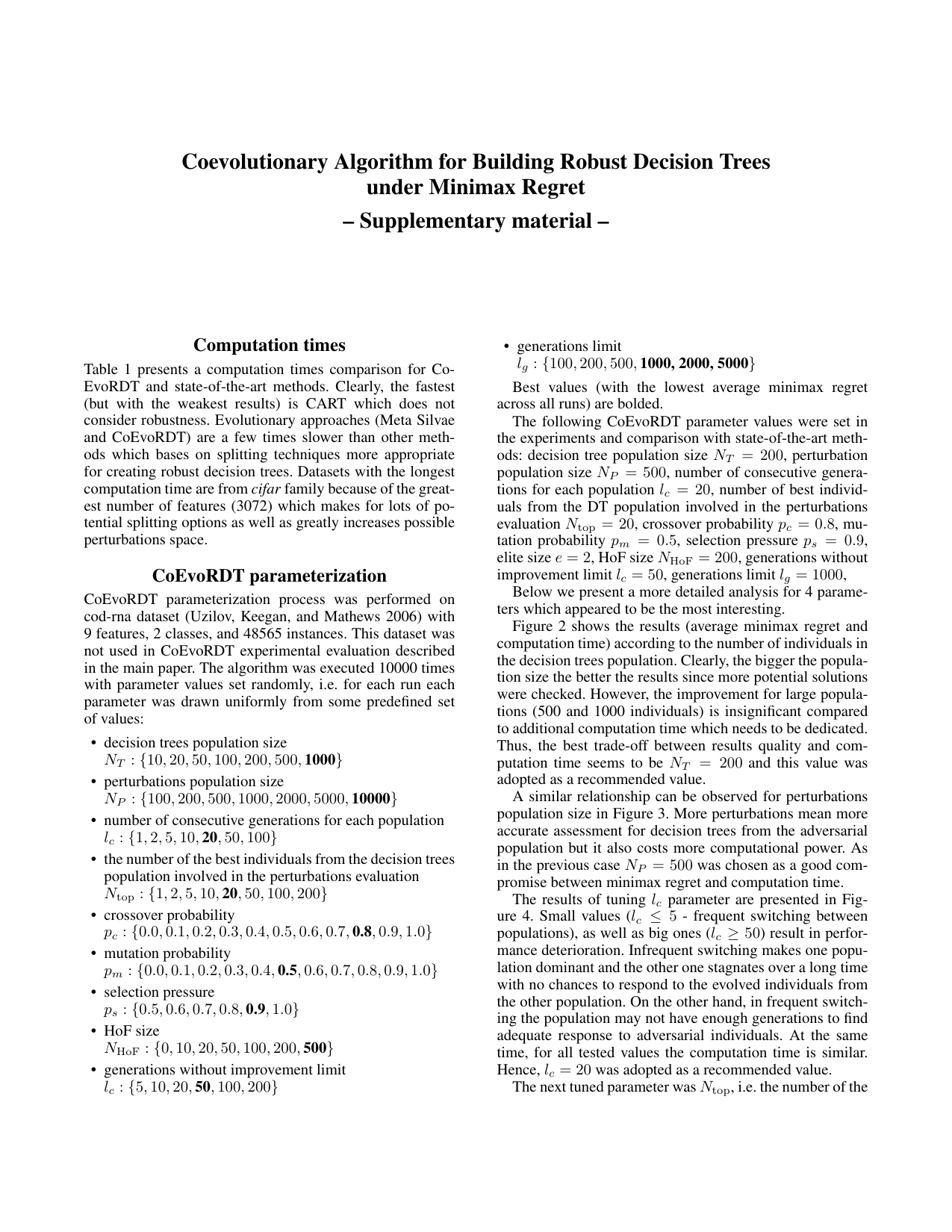}

\end{document}